\documentclass{article} 
\usepackage{iclr2027_conference,times}

\usepackage[utf8]{inputenc}
\usepackage[T1]{fontenc}

\usepackage{xcolor}

\definecolor{linkpurple}{RGB}{90,40,90}
\definecolor{pdrblue}{RGB}{37,99,235}
\definecolor{green}{HTML}{10B981}

\usepackage{hyperref}       
\hypersetup{
  colorlinks=true,
  linkcolor=linkpurple,
  citecolor=green!50!black,
  urlcolor=blue!50!black
}
 
\usepackage{url}            
\usepackage{booktabs}       
\usepackage{amsfonts}       
\usepackage{nicefrac}       
\usepackage{microtype}      

\usepackage{amsmath,amssymb}
\usepackage{mathtools}
\usepackage{amsthm}
\usepackage{bm}
\usepackage{enumitem}
\usepackage{ulem}
\usepackage{cleveref}

\Crefname{section}{Sec.}{Secs.}

\crefname{appendix}{App.}{Apps.}
\Crefname{appendix}{App.}{Apps.}

\crefname{thm}{Thm.}{Theorems}
\crefname{algorithm}{Alg.}{Algorithms}
\crefname{equation}{Eq.}{Eqs.}
\crefname{figure}{Fig.}{Figs.}
\crefname{prop}{Prop.}{Props.}
\creflabelformat{equation}{#2\textup{#1}#3} 

\makeatletter
\AddToHook{cmd/appendix/before}{%
  \def\cref@section@alias{appendix}%
  \def\cref@subsection@alias{appendix}%
}
\makeatother

\usepackage{pifont}
\usepackage[most]{tcolorbox}

\usepackage{algorithm}
\usepackage{algpseudocode}
\usepackage{placeins}
\usepackage{float}

\usepackage{multirow}
\usepackage{caption} 
\usepackage{adjustbox}
\usepackage{color, colortbl}
\usepackage{subcaption}
\usepackage{graphicx}
\usepackage{makecell}
\usepackage{tikz}
\usetikzlibrary{arrows.meta,positioning}

\usepackage{wrapfig,lipsum}

\newtheorem{proposition}{Proposition}

\newtcolorbox{takeaway}{
  enhanced,
  colback=pdrblue!4,
  colframe=pdrblue!70!black,
  boxrule=0.75pt,
  arc=1.8pt,
  left=6pt,right=6pt,top=4pt,bottom=4pt,
  before skip=6pt,after skip=7pt,
  fontupper=\small
}

\newif\ifdraftmarks
\draftmarksfalse
\definecolor{draftpending}{HTML}{7C3AED}
\definecolor{draftplaceholder}{HTML}{DC2626}
\definecolor{draftverify}{HTML}{D97706}

\definecolor{fbred}{HTML}{B91C1C}
\definecolor{fbrowhighlight}{HTML}{F7EEEE}

\draftmarkstrue

\newif\ifarxiv
\newcommand{\arxivcopy}{%
  \arxivtrue
  \iclrfinalcopy
  \pagestyle{fancy}
  \draftmarksfalse
}

\title{Train for Many, Update with One:\\
Improving Multi-Sample Reasoning via\\[-0.05em]
Fenchel--Bregman Posterior Learning}

\author{
\begin{tabular}{c}
Sungjun Lim$^{1,2}$\thanks{Work done during an internship at RIKEN AIP.}
\quad Adrian Robert Minut$^{1,3}$
\quad Nico Carlsen Daheim$^{4,5}$\\
\bfseries Mohammad Emtiyaz Khan$^{1,4,5}$
\quad Kyungwoo Song$^{2}$\thanks{Corresponding authors.}
\quad Thomas M{\"o}llenhoff$^{1}$\footnotemark[2]\\
\textnormal{$^{1}$ RIKEN AIP \quad $^{2}$ Yonsei University \quad $^{3}$ Sapienza University of Rome} \\
\textnormal{$^{4}$ TU Darmstadt \quad $^{5}$ Hessian.AI}
\end{tabular}
}

\arxivcopy 
\begin{document}

\ifarxiv
\begingroup
\centering
\maketitle\par
\endgroup
\lhead{Preprint.}
\else
\maketitle
\fi

\vspace{-1em}
\begin{abstract}
Multi-sample reasoning measures \textit{candidate coverage}, the probability that at least one of $K$ candidates is correct.
However, minimizing the average loss of individual attempts does not directly optimize candidate coverage.
We therefore formulate \textit{finite-$K$ posterior learning}, which trains a distribution over solver parameters to reduce the probability that all $K$ independently generated candidates fail.
To optimize this objective efficiently, we propose a Fenchel--Bregman (FB) approach that requires only one posterior draw per update, rather than the $K$ used by direct Monte Carlo optimization.
FB first uses an exact Fenchel reformulation to separate problem-specific weights from the posterior update, enabling one-draw posterior updates.
It then applies a closed-form Bregman update to track these weights through an exponential running average of observed failures, reusing the same draw.
We instantiate FB with IVON, a variational optimizer for Gaussian posteriors over solver parameters, yielding \textbf{Fenchel--Bregman IVON (FBI)}.
Across diverse recursive and autoregressive reasoning tasks, FBI improves candidate coverage.
In a matched comparison, it achieves comparable Pass@10 using 12.3\% of the direct Monte Carlo baseline's measured training time per update.
\end{abstract}

\section{Introduction}
\label{sec:introduction}

Complex reasoning problems often admit several solution paths, and a single attempt may fail even when another succeeds.
Multi-sample reasoning generates $K$ candidate solutions for the same problem~\citep{wang2023selfconsistency,cobbe2021verifiers}.
Its \textit{candidate coverage}, commonly measured by Pass@$K$, is the probability that at least one candidate is correct~\citep{chen2021codex,brown2024monkeys}.
For a fixed budget $K$, coverage depends on the distribution that generates the candidates.
This motivates learning the candidate-generating distribution rather than treating it as fixed.

We study this candidate-generation problem at the level of solver parameters.
Specifically, we learn a distribution $q_\phi(\theta)$ over solver parameters, where each draw instantiates a solver that produces one candidate.
Independent draws from $q_\phi$ form the $K$-candidate pool used at inference.
Recent methods align training with multi-sample inference by optimizing Pass@$K$ and other group-level objectives~\citep{pmlr-v267-tang25o,walder2025pkpo}.
Related approaches optimize multiple moments of problem-level failure~\citep{zhang2026beyond}, limit confidence to preserve repeated-generation coverage~\citep{chen2025rethinking}, or adapt fine-tuning to best-of-$N$ selection~\citep{chow2025inferenceaware}.
These methods train a single model to generate candidates for multi-sample inference, whereas we train a distribution over solver parameters for candidate coverage under a fixed budget $K$.

A natural way to train $q_\phi$, as in standard variational learning, is to minimize the expected loss of an individually sampled solver~\citep{blundell2015weight,shen2024variational}.
This treats each draw in isolation, while multi-sample reasoning succeeds when at least one of $K$ draws solves the problem.
We address this mismatch by training $q_\phi$ directly for the coverage of the $K$-candidate pool and call this formulation \textit{finite-$K$ posterior learning} (\Cref{sec:finite-k}).
Finite-$K$ posterior learning minimizes the probability that the entire $K$-draw candidate pool fails.
The resulting objective assigns a greater learning weight to problems that the other candidates are unlikely to solve.
We call the direct Monte Carlo estimator of this failure probability \textit{Direct-$K$}.
It is unbiased, but requires $K$ independent posterior draws per update.

To address this cost, we propose a \textbf{Fenchel--Bregman (FB)} optimization approach for finite-$K$ posterior learning using one posterior draw per update.
Using Fenchel duality, FB rewrites the objective so that the posterior can be updated using a weighted loss evaluated from one posterior draw.
This reformulation strategy is well established in compositional optimization and learning objectives~\citep{wang2017scgd,devraj2019stochastic,mohri2026distributional}, and we use it here to enable one-draw posterior updates for finite-$K$ candidate coverage.
As the posterior changes, a closed-form Bregman update adjusts the weights using past failure estimates and the current solver outcome (\Cref{sec:bregman-tracking}).
The same draw used for the posterior update also supplies the observation for the next weight update, avoiding additional posterior draws.
We instantiate FB with IVON~\citep{shen2024variational}, yielding \textbf{Fenchel--Bregman IVON (FBI)} (\Cref{sec:ivon-update}).
\Cref{fig:fbi_overview} summarizes this procedure.

\begin{figure*}[t]
    \centering
    \includegraphics[width=\textwidth]{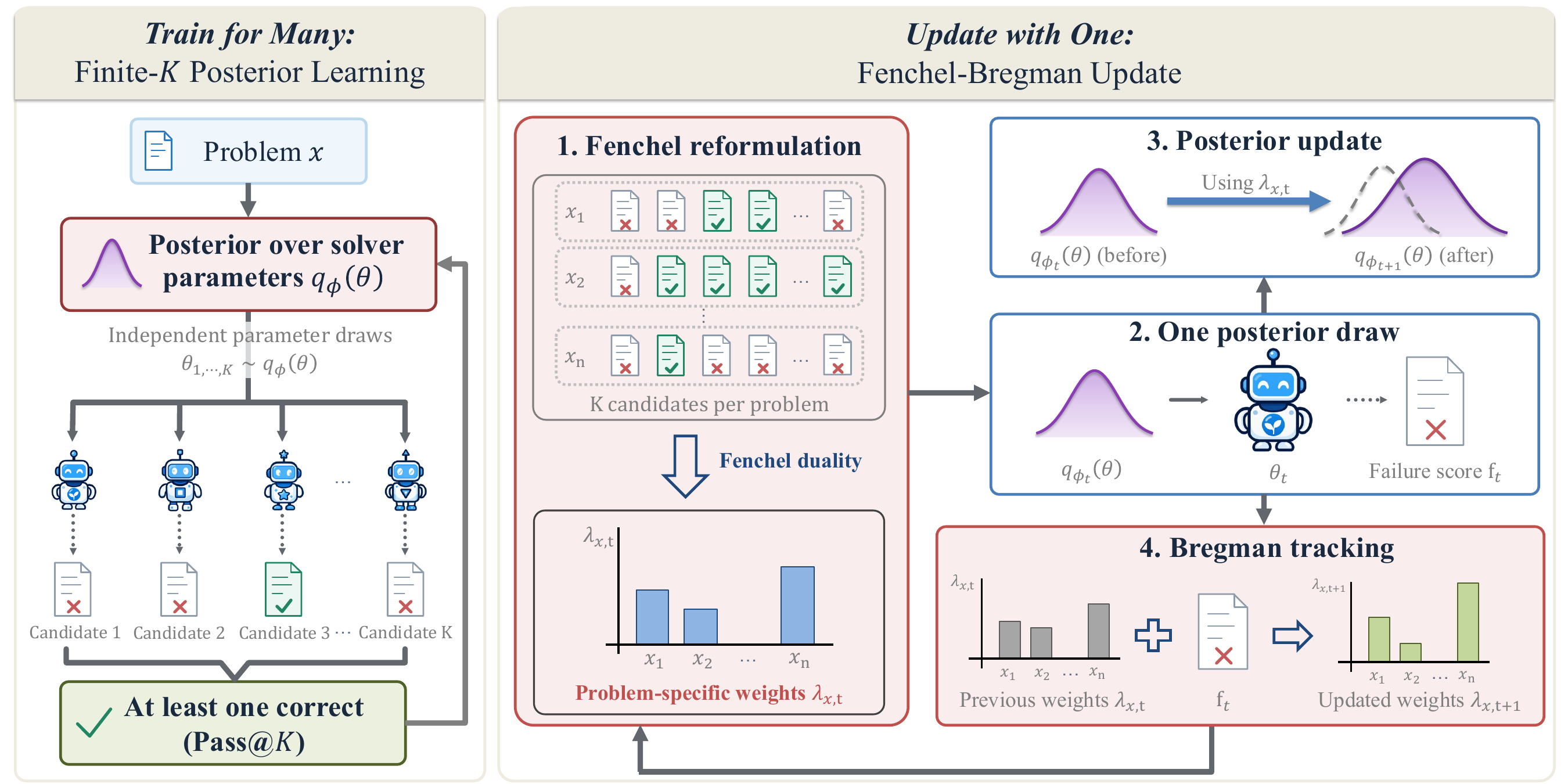}
    \caption{
    \textbf{Overview of finite-$K$ posterior learning and FBI.}
    Left: $K$ independent posterior draws generate a candidate pool, and training targets the probability that all $K$ candidates fail.
    Right: Fenchel reformulation enables a one-draw IVON posterior update with a fixed problem-specific weight, while Bregman tracking reuses the observed failure to update the weight for the next visit.
    }
    \label{fig:fbi_overview}
\end{figure*}

We evaluate FBI across recursive reasoning benchmarks with multiple solver backbones and mathematical reasoning with an autoregressive language model.
On a matched comparison, FB improves candidate coverage, with gains also observed across the evaluated backbones and diagonal and structured Gaussian posteriors.
In the same setting, FBI achieves comparable coverage to Direct-$K$ with substantially lower training computation.
Together, these results support aligning posterior training with multi-sample inference using only one new posterior draw per update.

To summarize, our contributions are the following:
\begin{itemize}[leftmargin=*]
    \item We formulate finite-$K$ posterior learning, aligning a posterior over solver parameters with the success of the finite candidate pool used at inference.
    \item We propose FB for one-draw posterior updates using an exact Fenchel reformulation and closed-form Bregman update for weight tracking, and instantiate FB with IVON as FBI.
    \item We show improved candidate coverage across recursive and autoregressive reasoning, and substantially lower training cost than Direct-$K$ in a matched comparison.
\end{itemize}

\section{\texorpdfstring{Train for Many: Finite-$K$ Posterior Learning}{Train for Many: Finite-K Posterior Learning}}
\label{sec:finite-k}

We now formalize the candidate-generating distribution underlying multi-sample reasoning and derive a training objective aligned with a finite candidate pool.
We first describe multi-sample reasoning under posterior parameter sampling in~\Cref{subsec:multi-sample_reasoning_with_posterior_draws}, then formulate finite-$K$ posterior learning in~\Cref{subsec:finite-k_posterior_learning}, and finally characterize the learning signal it induces in~\Cref{subsec:finite-k_learning_signal}.

\subsection{Multi-Sample Reasoning with Posterior Draws}
\label{subsec:multi-sample_reasoning_with_posterior_draws}
\paragraph{Multi-sample reasoning.}
Complex reasoning problems can admit multiple solution paths, and success may vary across attempts.
Multi-sample reasoning exploits this variation by generating $K$ candidate solutions for the same problem, usually from one solver model with parameters~$\theta$.
Candidate coverage, commonly measured by $\mathrm{Pass@}K$, is the probability that at least one of these candidates is correct.

\paragraph{Posterior candidate generation.}
For a fixed candidate budget $K$, coverage depends on how the candidate-generating distribution allocates probability across possible solutions.
This makes the distribution over candidates a natural object to optimize.
We represent this distribution through a learned distribution $q_\phi(\theta)$ over the parameters of a solver neural network. We then draw $K$ different solvers from this distribution and one candidate from each of these solvers. 
That is, the candidate pool used at inference consists of a total of $K$ outputs with one generated from each of the $K$ parameter draws $\theta_k\sim q_\phi$.
We seek to train $q_\phi$ for the collective success of this $K$-candidate pool.

\subsection{Finite-$K$ Posterior Learning}
\label{subsec:finite-k_posterior_learning}
\paragraph{From individual draws to candidate pools.}
When learning $q_\phi$, a natural objective is to minimize the expected loss of solvers sampled from this distribution, as for instance, a standard variational Bayesian approach would suggest.
However, this treats each draw individually, whereas multi-sample inference evaluates whether at least one of $K$ sampled solvers succeeds.
To align training with this inference criterion, we propose to optimize $q_\phi$ directly for $K$-candidate coverage.
We call this formulation \textit{finite-$K$ posterior learning}.

\paragraph{Finite-$K$ objective.}
For problem $x$, let $F(\theta,x)\in\{0,1\}$ denote the exact failure indicator for one draw, with $F=1$ for failure.
Write $\bar F_\phi(x)\coloneqq\mathbb E_{\theta\sim q_\phi}[F(\theta,x)]$ for the failure probability under $q_\phi$.
We draw $\theta_{1:K}$ independently from $q_\phi$.
Additional solver randomness is independent across candidates and included in all expectations. We omit it from the notation for readability.
Since the $K$ candidate draws are independent, the probability that they all fail is
\begin{equation}
  \Pr(\text{all $K$ candidates fail}\mid x)
  =
  \prod_{k=1}^K \Pr\!\left(F(\theta_k,x)=1\right)
  =
  \bar F_\phi(x)^K.
  \label{eq:binary-all-fail}
\end{equation}
Since candidate coverage is the complement of the all-fail event, averaging over the empirical distribution of training problems gives $\mathrm{Pass@}K(\phi)=1-\mathbb E_x[\bar F_\phi(x)^K]$.
Hence, maximizing coverage is equivalent to minimizing $\mathbb E_x[\bar F_\phi(x)^K]$.

Optimizing candidate coverage on the training problems alone may favor a highly concentrated parameter distribution.
We therefore add KL regularization toward a fixed reference prior $p_0(\theta)$, discouraging collapse toward a point estimate while controlling the spread of $q_\phi$.
This yields the proposed finite-$K$ posterior-learning objective:
\begin{equation}
\fcolorbox{lightgray}{white}{$
  \min_\phi~
  \mathbb E_x\!\left[\bar F_\phi(x)^K\right]
  +
  \tau_{\mathrm{KL}}D_{\mathrm{KL}}(q_\phi\,\Vert\,p_0)$.
  }
  \label{eq:exact-finite-k-objective}
\end{equation}
The first term minimizes the expected all-fail probability, while $\tau_{\mathrm{KL}}\geq0$ controls regularization toward the reference prior $p_0$.

\subsection{Understanding and Optimizing the Finite-$K$ Objective}
\label{subsec:finite-k_learning_signal}
\paragraph{Problem-dependent learning signal.}
To see how the objective in \Cref{eq:exact-finite-k-objective} differs from an individual-draw objective, we examine the per-problem gradient of its data term.
When $\bar F_\phi(x)$ is differentiable,
\begin{equation}
  \nabla_\phi \bar F_\phi(x)^K
  =
  \underbrace{K\bar F_\phi(x)^{K-1}}_{\lambda_x^\star}
  \nabla_\phi \bar F_\phi(x).
  \label{eq:finite-k-data-gradient}
\end{equation}
We denote this multiplicative factor by $\lambda_x^\star$, the problem-specific gradient weight induced by the objective in \Cref{eq:exact-finite-k-objective}.
For $K\geq2$, $\bar F_\phi(x)^{K-1}$ is the probability that the other $K-1$ candidates all fail.
Problems that the remaining candidates are unlikely to solve therefore receive a larger gradient weight, whereas problems already likely to be solved by another candidate receive a smaller one.
When $K=1$, $\lambda_x^\star=1$, so this problem-dependent reweighting disappears.

\paragraph{Naive Direct-$K$ estimation.}
For each problem $x$, $K$ independent posterior draws satisfy
\[
  \mathbb E_{\theta_{1:K}\sim q_\phi} \left[\prod_{k=1}^K F(\theta_k,x)\right]
  =
  \bar F_\phi(x)^K.
\]
Thus, $\prod_{k=1}^K F(\theta_k,x)$ is an unbiased estimator of the per-problem data term in \Cref{eq:exact-finite-k-objective}. We call this direct estimator \textit{Direct-$K$}.
Using Direct-$K$ to estimate the expected all-fail probability in \Cref{eq:exact-finite-k-objective} requires $K$ posterior draws per update, increasing the computational cost of each update.
\Cref{app:finite-k-details} provides the detailed derivation and the unbiasedness of this estimator.
The key computational challenge is to preserve the learning signal in \Cref{eq:finite-k-data-gradient} while using only a single new posterior draw per update.
In the following \Cref{sec:fb-optimization} we develop an efficient one-draw update for this purpose.

\section{Update with One: Fenchel--Bregman IVON}
\label{sec:fb-optimization}

In \Cref{sec:finite-k} we have defined the finite-$K$ learning signal and showed that Direct-$K$ estimates the data term in \Cref{eq:exact-finite-k-objective} using $K$ independent posterior draws per update.
Our goal is to retain this learning signal while using only one new posterior draw per update to improve training efficiency.
In the following \Cref{sec:fenchel} we will use a \textit{Fenchel reformulation} to separate the problem-specific weight from the posterior update, making a one-draw posterior update possible once the weight is fixed.
\Cref{sec:bregman-tracking} then uses \textit{Bregman tracking} to update this weight from the observed failure without additional posterior draws.
Finally, \Cref{sec:ivon-update} instantiates these updates with the variational learning algorithm IVON of \citet{shen2024variational} to optimize \Cref{eq:exact-finite-k-objective} for the distribution parameters $\phi$.
We call this IVON instantiation \textbf{Fenchel--Bregman IVON (FBI)}.

\subsection{Fenchel reformulation and one-draw posterior update}
\label{sec:fenchel}

\paragraph{Why direct estimation needs $K$ draws.}
Direct-$K$ uses $K$ independent posterior draws to estimate the gradient of $\bar F_\phi(x)^K$. Reusing a single draw does not preserve this quantity because, for binary $F$,
\[
  \mathbb E_{\theta\sim q_\phi}[F(\theta,x)^K]
  =
  \bar F_\phi(x),
\]
which generally differs from $\bar F_\phi(x)^K$. \Cref{eq:finite-k-data-gradient}, however, shows that the desired data gradient has the form $\lambda_x^\star\nabla_\phi\bar F_\phi(x)$.
This suggests the possibility of separating the problem-specific weight from the posterior update so that the posterior can be updated from a single draw once the weight is fixed.

\paragraph{Fenchel reformulation.}
Fenchel conjugacy~\citep{rockafellar1970convex} provides this separation by expressing the convex function $r^K$ as a maximization over weighted linear functions of $r\in[0,1]$, which we set to $r=\bar F_\phi(x)$. For $K\geq2$, we have the following representation, 
\begin{equation}
  r^K
  =
  \max_{\lambda\in[0,K]}
  \lambda r-\psi_K(\lambda),
  \qquad
  \lambda^\star(r)=Kr^{K-1},
  \label{eq:fenchel-identity}
\end{equation}
where $\psi_K(\lambda)=(K-1)(\lambda/K)^{K/(K-1)}$ defined on $[0,K]$ is the so-called Fenchel conjugate of $r^K$.
Such representations exist for any convex function~\citep{rockafellar1970convex} and for completeness, we recall in App.~\ref{app:fenchel-proof} that the identity is exact and that $\lambda^\star(r)$ is its unique maximizer.

Applying \Cref{eq:fenchel-identity} to $r=\bar F_\phi(x)$ in \Cref{eq:exact-finite-k-objective} introduces one weight $\lambda_x\in[0,K]$ for each training problem as a dual variable that acts as a problem-specific weight, giving the equivalent saddle-point objective
\begin{equation}
\fcolorbox{lightgray}{white}{$
  \min_\phi
  \max_{\lambda_x\in[0,K]}~
    \mathbb E_x
    \left[
      \mathbb E_{\theta\sim q_\phi}[\lambda_x F(\theta,x)]
      -\psi_K(\lambda_x)
    \right]
    +
    \tau_{\mathrm{KL}}D_{\mathrm{KL}}(q_\phi\, \Vert\,  p_0).
  $}
  \label{eq:saddle-objective}
\end{equation}
\Cref{eq:saddle-objective} separates $\phi$ and $\lambda_x$ into different optimization variables.
Holding $\lambda_x$ fixed therefore defines a posterior-update subproblem whose $\phi$-dependent data term is simply $\mathbb E_{\theta\sim q_\phi}[\lambda_xF(\theta,x)]$, while the KL term is unchanged.
The maximizing weight is $\lambda_x^\star=K\bar F_\phi(x)^{K-1}$, exactly the problem-specific gradient weight in \Cref{eq:finite-k-data-gradient}.
For $\tau_{\mathrm{KL}}>0$, this fixed-weight posterior subproblem has the generalized Bayesian variational form of \citet{bissiri2016general} which can be solved, for example, with the  IVON optimizer of \citet{shen2024variational}.

\paragraph{Fixed-weight one-draw posterior update.}
Since $\lambda_x$ is a separate variable in \Cref{eq:saddle-objective}, we can hold it fixed while taking a posterior step in $\phi$.
For fixed $\lambda_x$,
\[
  \nabla_\phi
  \mathbb E_{\theta\sim q_\phi}
  [\lambda_xF(\theta,x)]
  =
  \lambda_x\nabla_\phi\bar F_\phi(x),  
  \qquad
  \lambda_x=\lambda_x^\star \quad 
  \Longrightarrow \quad 
  \nabla_\phi\bar F_\phi(x)^K.
\]
Under standard conditions, one posterior draw gives an unbiased estimate of this fixed-weight gradient, and at $\lambda_x=\lambda_x^\star$ it recovers the finite-$K$ data gradient in \Cref{eq:finite-k-data-gradient}; see \Cref{app:fixed-dual-gradient}.
The posterior step can therefore use one draw.
The remaining problem is to obtain a suitable $\lambda_x$ without additional draws.

\subsection{Bregman update for the dual weight}
\label{sec:bregman-tracking}

\paragraph{Why the weight must be tracked.}
The optimal weight $\lambda_x^\star=K\bar F_\phi(x)^{K-1}$ changes as $q_\phi$ changes.
Re-estimating the current $\bar F_\phi(x)$ from multiple additional posterior draws at every update would remove the one-draw advantage.
At update step $t$, let $\lambda_{x,t}$ denote the weight used for problem $x$ and let $F_t=F(\theta_t,x)$ denote the failure observed from the single draw $\theta_t$. We seek an update from $\lambda_{x,t}$ to $\lambda_{x,t+1}$ using $F_t$, rather than recomputing $\lambda_x^\star$ from additional draws.

\paragraph{Bregman tracking.}
To update the dual weight $\lambda_x$, we use a Bregman update generated by the convex function $\psi_K$ in the Fenchel representation~\citep{bregman1967relaxation}.
This Bregman update can equivalently be viewed as a mirror-ascent step with $\psi_K$ as the mirror potential.
\Cref{app:bregman-proof} gives the proximal objective and its derivation.
The resulting update reduces to a simple exponential running average of the observed failures:
\begin{equation}
\fcolorbox{lightgray}{white}{$
  \widehat F_{x,t+1}
  =
  \beta\widehat F_{x,t}
  +(1-\beta)F_t,
  \qquad
  \lambda_{x,t+1}
  =
  K\widehat F_{x,t+1}^{K-1},
  $}
  \label{eq:dual-ema}
\end{equation}
where $\beta\in(0,1)$ controls how strongly past failure estimates are retained in the exponential running average.
The updated running estimate $\widehat F_{x,t+1}$ aggregates observed failures, and $K\widehat F_{x,t+1}^{K-1}$ gives the next problem-specific weight.
The update is closed form and uses no additional posterior draw.

\subsection{Posterior updates with IVON}
\label{sec:ivon-update}

\paragraph{Failure score and gradient estimation.}
Practical training uses a bounded failure score $f(\theta,x)\in[0,1]$, with posterior mean $\bar f_\phi(x):=\mathbb{E}_{\theta\sim q_\phi}[f(\theta,x)]$.
The score may be a differentiable surrogate for binary failure or be estimated from sampled binary outcomes.
When $f$ differs from the binary failure indicator $F$, this yields a score-based surrogate of the exact finite-$K$ objective in \Cref{eq:exact-finite-k-objective}.
Accordingly, the fixed-weight posterior gradient can be obtained either by backpropagation through $f$ or by a stochastic gradient estimator.
In both cases, the resulting failure observations are also used to update the running estimate $\widehat f$.

\begin{algorithm}[ht]
  \caption{\textbf{Fenchel--Bregman IVON (FBI) for finite-$K$ posterior learning.}}
  \label{alg:finite-k}
  \small
  \begin{algorithmic}[1]
    \Require
    Training set $\mathcal D$, diagonal Gaussian posterior $q_{\phi_0}$, prior $p_0$,
    pool size $K\geq2$, KL coefficient $\tau_{\mathrm{KL}}$,
    EMA coefficient $\beta\in(0,1)$,
    failure score $f$ and initial failure estimate $\widehat f_{\mathrm{init}}\in[0,1]$
    \State
    Initialize the IVON posterior mean and diagonal precision from $\phi_0$, and $\widehat f_{x,0} \gets \widehat f_{\mathrm{init}}$ for every $x \in \mathcal D$
    \For{$t=0,1,2,\ldots$}
      \State Sample a minibatch $\mathcal B_t$ from $\mathcal D$
      \State \textcolor{fbred}{\textbf{Form weight:}} $\lambda_{x,t} \gets K \widehat f_{x,t}^{K-1}$
      for each $x\in\mathcal B_t$
      \State \textcolor{fbred}{\textbf{Draw once:}} Draw $\theta_t \sim q_{\phi_{t}}$ and evaluate $f(\theta_t,x)$ for each $x\in\mathcal B_t$
      \State \textcolor{fbred}{\textbf{Estimate gradient:}} Compute a gradient estimate of \Cref{eq:fbi-weighted-loss}
      \Comment{fixed $\lambda_{x,t}$}
      \State
      \textcolor{fbred}{\textbf{Update posterior:}}
      Apply IVON from $\phi_{t}$ to $\phi_{t+1}$ using the gradient estimate and KL regularization
      \State \textcolor{fbred}{\textbf{Update failure estimate:}}
      $\widehat f_{x,t+1}$
      $\gets$
      $\beta \widehat f_{x,t}
      +(1-\beta)f(\theta_t,x)$
      for each $x\in\mathcal B_t$
      \Comment{others unchanged}
    \EndFor
  \end{algorithmic}
\end{algorithm}

\paragraph{IVON posterior.}
We instantiate the posterior update with IVON, which learns a diagonal Gaussian posterior~\citep{shen2024variational}.
We write $q_\phi(\theta)=\mathcal N(\theta;\mathbf m,\operatorname{diag}(\boldsymbol\sigma^2))$, with mean $\mathbf m$ and diagonal precision $\boldsymbol\sigma^{-2}$, so IVON learns both the mean and the parameter-wise sampling scales.
Under our batch-mean loss convention, IVON's effective sample size (ESS) sets the KL coefficient as $\tau_{\mathrm{KL}}=1/\mathrm{ESS}$.
ESS and weight decay jointly determine the implied Gaussian prior, so we rescale them inversely when varying the KL strength to keep the prior fixed.

\paragraph{One-draw IVON update.}
At update $t$, FBI first forms $\lambda_{x,t}=K\widehat f_{x,t}^{K-1}$ and then draws one $\theta_t\sim q_{\phi_{t}}$.
For a minibatch $\mathcal B_t$, the fixed-weight data loss is given as follows:
\begin{equation}
  \mathcal L^{\mathrm{data}}_t(\theta_t)
  =
  \frac{1}{|\mathcal B_t|}
  \sum_{x\in\mathcal B_t}
  \lambda_{x,t}f(\theta_t,x).
  \label{eq:fbi-weighted-loss}
\end{equation}
IVON uses a gradient estimate of \Cref{eq:fbi-weighted-loss}, together with KL regularization to update the posterior mean and diagonal precision.
Since $\lambda_{x,t}$ is fixed during this posterior update, the Fenchel penalty $\psi_K(\lambda_{x,t})$ is constant and omitted from \Cref{eq:fbi-weighted-loss}.
The same failure observations then update $\widehat f_{x,t+1}$ through \Cref{eq:dual-ema}, so one posterior draw both updates $q_\phi$ and supplies the feedback used to prepare the next weight.

Overall, \Cref{alg:finite-k} follows a stochastic alternating primal--dual optimization for \Cref{eq:saddle-objective}.
With the dual weights fixed, IVON updates the posterior parameters $\phi$.
The same failure observations are then used in the Bregman update of the dual weights, which is equivalently a mirror-ascent step on \Cref{eq:saddle-objective}.

\section{Experiments}
\label{sec:experiments}
We evaluate whether FBI improves candidate coverage while reducing the training cost of finite-$K$ posterior learning.
$K$ denotes the number of candidates at evaluation (the $K$ in Pass@$K$), whereas $K_{\mathrm{train}}$ denotes the candidate count targeted by training.\footnote{Code is available at \url{https://github.com/team-approx-bayes/Fenchel-Bregman}.}
We first report benchmark results and candidate coverage across evaluation budgets for recursive reasoning models and an autoregressive language model (\Cref{sec:main-results,sec:autoregressive-results}).
We then compare performance across recursive backbones and Bayesian methods (\Cref{sec:backbone-bayesian-comparison}).
Finally, we analyze the training target, problem-specific weighting, failure recovery, and one-draw training cost (\Cref{sec:coverage-training-efficiency}).
Detailed protocols and hyperparameters are in \Cref{app:experimental-details}, and sensitivity analyses are in \Cref{app:ablations}.

\subsection{Recursive reasoning models}
\label{sec:main-results}

\paragraph{Setup and baselines.}
We evaluate FBI with recursive reasoning models on the Sudoku-Extreme, Maze-Hard, ARC-AGI-1, and ARC-AGI-2 tasks.
Sudoku-Extreme requires completing difficult $9\times9$ puzzles, while Maze-Hard requires finding optimal paths in $30\times30$ mazes.
The ARC-AGI benchmarks require inferring a grid transformation from a few examples and applying it to a new input~\citep{chollet2019measure,chollet2025arcagi2}.
We report candidate coverage and, where applicable, selected accuracy, which is defined as the exact-solution accuracy of each method's selector.

Our baselines include the Hierarchical Reasoning Model (HRM)~\citep{wang2025hierarchical}, Tiny Recursive Model (TRM)~\citep{jolicoeurmartineau2025trm}, Fixed-Point Reasoning Model (FPRM)~\citep{movahedi2026fixedpoint}, and Probabilistic Tiny Recursive Model (PTRM)~\citep{sghaier2026probabilistic}.
We also include the fixed-Gaussian weight-sampling controls W-PTRM and W-FPRM.
FBI uses TRM in the main benchmark comparison and FPRM in the additional backbone comparison.
Matched runs share the backbone, posterior initialization, training data, update budget, and evaluation budget.
For FBI, the Acc. and Pass@10 values in \Cref{tab:main-benchmark} and all results are means over three independent training runs.

\begin{table*}[ht]
  \caption{\textbf{Selected accuracy (Acc.) and candidate coverage (Pass@$K$) across four recursive reasoning benchmarks.}
  All values are percentages.
  \textbf{Bold} and \underline{underlined} numerical values mark the highest and second-highest values in each metric column, with ties marked equally.
  \textsuperscript{$\dagger$} denotes previously reported performance.
  FBI achieves or ties the highest value in every metric column.}
  \label{tab:main-benchmark}
  \centering
  \scriptsize
  \setlength{\tabcolsep}{3pt}
  \begin{adjustbox}{max width=1.0\linewidth}
  \begin{tabular}{l|cccc|cccccc}
    \toprule
    & \multicolumn{2}{c}{Sudoku-Extreme} & \multicolumn{2}{c|}{Maze-Hard} & \multicolumn{3}{c}{ARC-AGI-1} & \multicolumn{3}{c}{ARC-AGI-2} \\
    \cmidrule(lr){2-3}\cmidrule(lr){4-5}\cmidrule(lr){6-8}\cmidrule(lr){9-11}
    Method & Acc. $\uparrow$ & Pass@10 $\uparrow$ & Acc. $\uparrow$ & Pass@10 $\uparrow$ & Pass@1 $\uparrow$ & Pass@2 $\uparrow$ & Pass@100 $\uparrow$ & Pass@1 $\uparrow$ & Pass@2 $\uparrow$ & Pass@100 $\uparrow$ \\
    \midrule
    HRM\textsuperscript{$\dagger$} & 55.00 & -- & 74.50 & -- & -- & 40.30 & -- & -- & 5.00 & -- \\
    FPRM & 94.33 & -- & 86.60 & -- & -- & \textbf{47.50} & -- & -- & 6.20 & -- \\
    TRM & 87.28 & -- & 87.00 & -- & 40.00 & 44.38 & 59.50 & \underline{5.00} & \underline{7.50} & 11.53 \\
    PTRM & 97.36 & 97.38 & \underline{87.70} & 88.40 & \underline{40.50} & 44.88 & 60.25 & \underline{5.00} & \underline{7.50} & \underline{11.94} \\
    W-PTRM & \underline{98.83} & \underline{98.84} & 87.50 & \underline{89.80} & 40.37 & \underline{45.25} & \underline{61.25} & \underline{5.00} & 7.08 & 10.69 \\
    \rowcolor{fbrowhighlight}
    \textbf{FBI (Ours)} & \textbf{99.53} & \textbf{99.55} & \textbf{88.00} & \textbf{91.30} & \textbf{40.63} & \textbf{47.50} & \textbf{64.25} & \textbf{5.42} & \textbf{8.19} & \textbf{12.78} \\
    \bottomrule
  \end{tabular}
  \end{adjustbox}
\end{table*}

\paragraph{Results.}
FBI improves candidate coverage and selected accuracy on Sudoku-Extreme and Maze-Hard (\Cref{tab:main-benchmark}).
On Sudoku-Extreme, FBI nearly matches its Pass@10 of 99.55\% with 99.53\% selected accuracy.
On Maze-Hard, FBI raises Pass@10 from W-PTRM's 89.80\% to 91.30\%, while reaching 88.00\% selected accuracy.
The remaining gap on Maze-Hard shows that some correct candidates are still not returned by the selector.
FBI also improves candidate coverage after aggregation and ranking on ARC-AGI-1 and ARC-AGI-2.
It achieves or ties the highest Pass@$K$ across the evaluated values of $K$.

\paragraph{Candidate coverage.}
We examine coverage across candidate budgets in recursive models.
Here, the coverage panels compare equal numbers of candidate attempts instead of equal wall-clock times or FLOP budgets across models.
We compare FBI with Direct-10, PTRM, and W-PTRM to assess coverage under fixed evaluation budgets.
FBI and Direct-10 both target $K_{\mathrm{train}}=10$, but use one and ten posterior draws per training update, respectively.
We include Direct-10 to compare FBI's candidate coverage and training cost against a direct estimator of the same finite-$K_{\mathrm{train}}$ target.
At test time, all four methods generate multiple candidates with the same TRM backbone and the same reasoning depth of 16.
\Cref{fig:maze-hard-pass-at-k} compares coverage at $K\in\{1,2,4,6,8,10\}$. FBI reaches the Pass@10 of PTRM, W-PTRM, and Direct-10 using 80\%, 60\%, and 20\% fewer candidates, respectively.
Across the budgets, the reported FBI run is 0.2--0.5 percentage points above Direct-10.

\begin{figure}[ht]
  \centering
  \begin{subfigure}[t]{0.35\textwidth}
    \centering
    \vspace{-0.5em}
    \includegraphics[width=\linewidth]{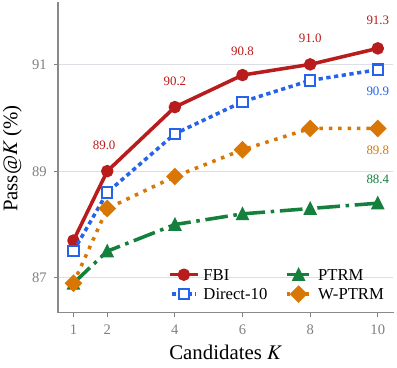}
    \caption{\textbf{Maze-Hard.}}
    \label{fig:maze-hard-pass-at-k}
  \end{subfigure}
  \hspace{0.1\textwidth}
  \begin{subfigure}[t]{0.35\textwidth}
    \centering
    \vspace{-0.5em}
    \includegraphics[width=\linewidth]{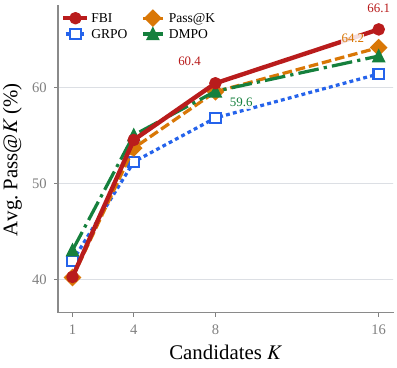}%
    \caption{\textbf{Mathematical reasoning.}}
    \label{fig:autoregressive-pass-at-k}
  \end{subfigure}
  \caption{\textbf{Candidate coverage across recursive and autoregressive models.}
  (a) Pass@$K$ on 1,000 Maze-Hard test problems using TRM at depth 16.
  (b) Mean Pass@$K$ across five mathematical-reasoning benchmarks using Qwen3-4B-Base.
  FBI improves candidate coverage in both settings.}
  \label{fig:candidate-coverage}
\end{figure}

\vspace{-1em}
\subsection{Autoregressive language models}
\label{sec:autoregressive-results}

\paragraph{Setup and baselines.}
We use Qwen3-4B-Base to test FBI beyond recursive solvers.
Following the mathematical-reasoning setup of MMPO~\citep{zhang2026beyond}, we train on MATH7.5K~\citep{hendrycks2021math} and evaluate on MATH500~\citep{lightman2024verify}, OlymMATH~\citep{sun2026olymmath}, AMC23~\citep{maa2023amc12}, AIME24~\citep{maa2024aime}, and AIME25~\citep{maa2025aime}.
Each FBI candidate uses an independent parameter draw from the learned posterior, followed by autoregressive sampling.
FBI places a diagonal IVON posterior over all actor parameters and trains with $K_{\mathrm{train}}=8$.
Each optimizer update uses one posterior parameter draw shared across eight responses per prompt.
\Cref{app:autoregressive-details} provides the implementation details.

We compare FBI with Base and five baselines.
GRPO~\citep{shao2024deepseekmath} maximizes mean reward with group-relative advantages.
Pass@$K$ Training~\citep{chen2025pass} optimizes group success using analytic response-level advantages.
DMPO~\citep{li2026beyondmodecollapse} matches a reward-proportional group distribution to preserve diversity.
MaxRL~\citep{tajwar2026maximum} uses compute-indexed objectives approaching maximum likelihood as the budget grows.
MMPO~\citep{zhang2026beyond} minimizes failure-probability moments to target truncated time to first success.

\begin{table*}[ht]
  \caption{\textbf{Mathematical reasoning performance with Qwen3-4B-Base.}
  Columns report Pass@8 (\%); \textbf{Avg.} is the unweighted mean across five benchmarks.
  \textbf{Bold} and \underline{underlined} values mark the best and second-best results per column.
  FBI achieves the highest average Pass@8 (60.44\%), 0.84 percentage points above DMPO.}
  \label{tab:autoregressive-standard}
  \vspace{-0.5em}
  \centering
  \adjustbox{max width=0.85\linewidth}{
  \begin{tabular}{l|ccccc|c}
    \toprule
    Method & MATH500 & OlymMATH & AMC23 & AIME24 & AIME25 & \textbf{Avg.} \\
    \midrule
    Base & 87.80 & 53.57 & 75.18 & 34.15 & 21.56 & 54.45 \\
    GRPO & 90.17 & 61.96 & 79.07 & 27.96 & 24.95 & 56.82 \\
    Pass@$K$ Training & \textbf{92.48} & \underline{62.24} & 82.64 & \underline{34.64} & 25.76 & 59.55 \\
    DMPO & 91.37 & 61.45 & \underline{84.68} & \textbf{34.65} & 25.84 & \underline{59.60} \\
    MaxRL & 91.23 & \textbf{62.26} & 83.07 & 28.08 & \underline{30.21} & 58.97 \\
    MMPO & 89.91 & 58.99 & 81.87 & 26.89 & 25.17 & 56.56 \\
    \rowcolor{fbrowhighlight}
    \textbf{FBI (Ours)} & \underline{91.64} & 60.68 & \textbf{85.20} & 31.00 & \textbf{33.70} & \textbf{60.44} \\
    \bottomrule
  \end{tabular}
  }
\vspace{-0.5em}
\end{table*}

\paragraph{Mathematical reasoning performance.}
FBI delivers the strongest aggregate Pass@8 performance in \Cref{tab:autoregressive-standard}, with an unweighted average of 60.44\%.
Beyond the aggregate result, FBI achieves the best scores on AMC23 and AIME25.

\vspace{-0.5em}
\paragraph{Candidate coverage.}
\Cref{fig:autoregressive-pass-at-k} compares FBI to GRPO, Pass@$K$ Training, and DMPO using average Pass@$K$ across the same five benchmarks.
At $K=1$, FBI reaches 40.22\%, below DMPO at 43.01\%.
The gap narrows at $K=4$, and FBI overtakes all three baselines at $K=8$ with 60.44\%, 0.84 percentage points above DMPO.
At $K=16$, FBI reaches 66.06\%, 1.90 percentage points over Pass@$K$ Training.
FBI's stronger performance with multiple candidates is consistent with its objective of improving candidate coverage.

\subsection{Performance across backbones and Bayesian methods}
\label{sec:backbone-bayesian-comparison}

We compare recursive backbones and Bayesian methods on Maze-Hard to assess whether FB's coverage gains extend beyond TRM and IVON's diagonal Gaussian posterior.
FB improves candidate coverage on both evaluated recursive backbones and with a structured Gaussian posterior on TRM.
Both FB variants also achieve higher Pass@10 than all evaluated Bayesian baselines.

\begin{figure}[ht]
  \centering
  \begin{subfigure}[t]{0.49\textwidth}
    \centering
    \includegraphics[width=\linewidth]{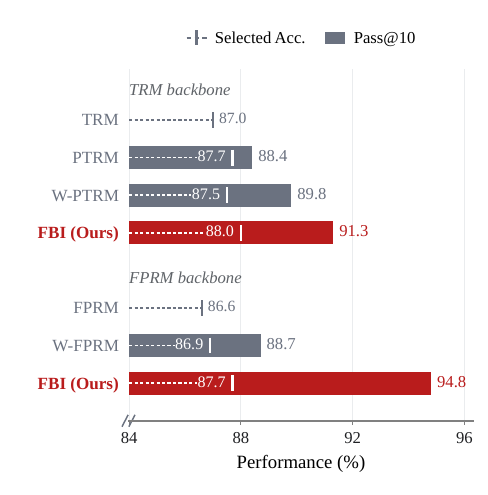}
    \caption{\textbf{Recursive backbones.}}
    \label{fig:maze-cross-backbone}
  \end{subfigure}\hfill%
  \begin{subfigure}[t]{0.49\textwidth}
    \centering
    \includegraphics[width=\linewidth]{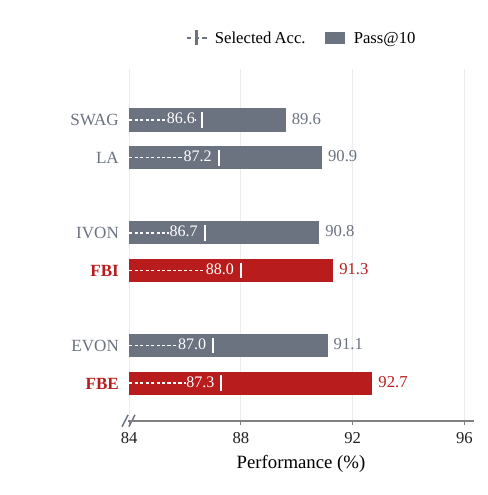}
    \caption{\textbf{Bayesian methods.}}
    \label{fig:maze-bayesian-methods}
  \end{subfigure}
  \caption{\textbf{Performance across backbones and Bayesian methods.}
  (a) Comparison across TRM and FPRM backbones.
  (b) Comparison of Bayesian methods with the TRM backbone.
  Bars show Pass@10, and dashed markers show selected accuracy.
  FB-based methods achieve higher Pass@10 than the corresponding baselines across the evaluated recursive backbones and variational optimizers.}
  \label{fig:maze-backbone-bayesian-comparison}
\end{figure}

\paragraph{Performance across recursive backbones.}
Figure~\ref{fig:maze-cross-backbone} tests whether the Maze-Hard gains extend from TRM to FPRM, using ten candidates and each method's selector.
With FPRM, FBI reaches 87.70\% selected accuracy and 94.80\% Pass@10, compared with 86.90\% and 88.70\% for W-FPRM.
Together with the TRM results, this shows that the gains over fixed-Gaussian weight sampling obtained using a learned posterior are not limited to one recursive backbone.

\paragraph{Performance across Bayesian methods.}
We compare stochastic weight averaging Gaussian (SWAG)~\citep{maddox2019swag}, Laplace approximation (LA)~\citep{daxberger2021laplace}, IVON, and Eigenspace-VON (EVON)~\citep{minut2026soap}.
All methods use TRM on Maze-Hard with ten candidates at evaluation.
We apply FB with EVON's structured Gaussian posterior, denoting this variant Fenchel--Bregman EVON (FBE).
FBI and FBE both target $K_{\mathrm{train}}=10$.
Appendix~\ref{app:bayesian-comparison} specifies the posterior constructions.

FBI and FBE achieve higher candidate coverage than IVON and EVON, respectively (\Cref{fig:maze-bayesian-methods}).
FBI raises Pass@10 from IVON's 90.8\% to 91.3\% and selected accuracy from 86.7\% to 88.0\%.
FBE reaches 92.7\% Pass@10 and 87.3\% selected accuracy, compared with 91.1\% and 87.0\% for EVON.
FBE's higher selected accuracy reflects increased candidate coverage, despite a lower fraction of correct selections among problems with a correct candidate.
Both FB variants exceed LA and SWAG in Pass@10, supporting posterior learning that explicitly targets finite-$K$ candidate coverage.
FBE leads in candidate coverage and FBI in selected accuracy among these methods.

\subsection{Analysis of finite-$K$ posterior learning}
\label{sec:coverage-training-efficiency}

We first examine how the training target and problem-specific weights affect candidate coverage on Maze-Hard.
We then analyze failure recovery and one-draw training cost.

\begin{figure}[ht]
  \centering
  \begin{subfigure}[t]{0.32\textwidth}
    \centering
    \includegraphics[width=\linewidth]{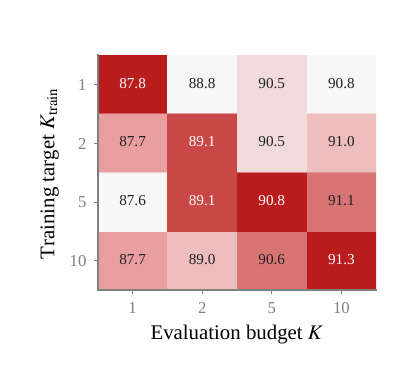}
    \caption{\textbf{Training target $K_{\mathrm{train}}$.}}
    \label{fig:maze-hard-training-target}
  \end{subfigure}\hfill%
  \begin{subfigure}[t]{0.32\textwidth}
    \centering
    \includegraphics[width=\linewidth]{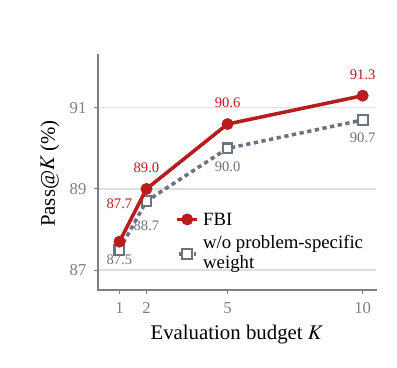}%
    \caption{\textbf{Problem-specific weights.}}
    \label{fig:maze-hard-problem-weights}
  \end{subfigure}\hfill%
  \begin{subfigure}[t]{0.32\textwidth}
    \centering
    \includegraphics[width=\linewidth]{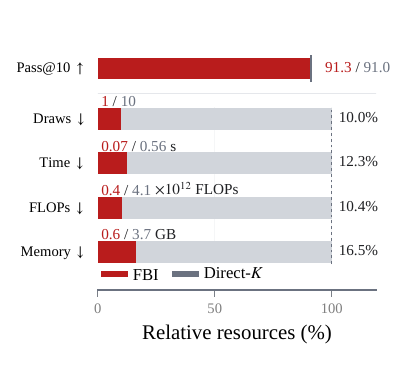}
    \caption{\textbf{Training cost.}}
    \label{fig:maze-hard-training-efficiency}
  \end{subfigure}
  \caption{\textbf{Finite-$K$ posterior learning and training efficiency.}
  (a) The diagonal pattern shows that $K_{\mathrm{train}}=K$ attains the highest reported Pass@$K$ at each budget.
  Cells show Pass@$K$ (\%); darker shades indicate higher within-column coverage.
  (b) Problem-specific weights outperform their minibatch-mean replacement at $K_{\mathrm{train}}=10$.
  (c) At $K_{\mathrm{train}}=10$, FBI achieves comparable coverage to Direct-10 using 12.3\% of its measured training time per update.
  Resources are relative to Direct-10 (100\%); FLOPs are in units of $10^{12}$.
  }
  \label{fig:maze-hard-finite-k-analysis}
\end{figure}

\vspace{-0.5em}
\paragraph{Effect of $K_{\mathrm{train}}$ on candidate coverage.}
We examine how the training target $K_{\mathrm{train}}\in\{1,2,5,10\}$ affects Pass@$K$ across evaluation budgets on Maze-Hard, retaining one posterior draw per training update (\Cref{fig:maze-hard-training-target}).
The columnwise rankings exhibit a diagonal pattern: matching $K_{\mathrm{train}}$ to the evaluation budget $K$ gives the highest reported Pass@$K$ at every tested budget.
At $K=2$, the matched target ties with $K_{\mathrm{train}}=5$ at 89.1\%.
Increasing $K_{\mathrm{train}}$ from 1 to 10 raises Pass@10 from 90.8\% to 91.3\%, while Pass@1 changes from 87.8\% to 87.7\%.
This pattern supports choosing $K_{\mathrm{train}}$ according to the intended evaluation budget and shows that the training target affects candidate-pool coverage with little change in single-candidate accuracy.

\paragraph{Role of problem-specific weighting.}
We test the contribution of problem-specific weighting at $K_{\mathrm{train}}=10$ by replacing each problem's weight with the minibatch mean, removing differences between problems while preserving the mean weight scale (\Cref{fig:maze-hard-problem-weights}).
Removing these differences lowers coverage at every tested budget: Pass@1 falls from 87.7\% to 87.5\%, and Pass@10 from 91.3\% to 90.7\%.
FBI's gain grows from 0.2 percentage points at $K=1$ to 0.6 points at $K=5$ and 10.
These results support a contribution from problem-specific weighting to candidate coverage, with larger gains when evaluating multiple candidates.

\paragraph{Recovery after first-candidate failures.}
\label{sec:first-failure-recovery}

We analyze recovery among candidates 2--10 after each method's first-candidate failures on one fixed Maze-Hard seed (Figure~\ref{fig:maze-hard-first-failure-recovery}).
FBI recovers an exact solution in 32.5\% of these cases (40/123), compared with 27.2\% for Direct-10, 22.1\% for W-PTRM, and 11.5\% for PTRM.
This recovery raises FBI's coverage from 87.7\% Pass@1 to 91.7\% Pass@10.
FBI also has a lower repeated-output rate (56.9\% versus 64.0--87.8\% for the baselines), so later candidates more often change the output and recover an exact solution.
The weight ablation (Figure~\ref{fig:maze-hard-problem-weights}) supports problem-specific weighting's contribution to coverage, but does not isolate its effect on conditional recovery.

\begin{figure*}[ht]
  \centering
\includegraphics[width=\textwidth]{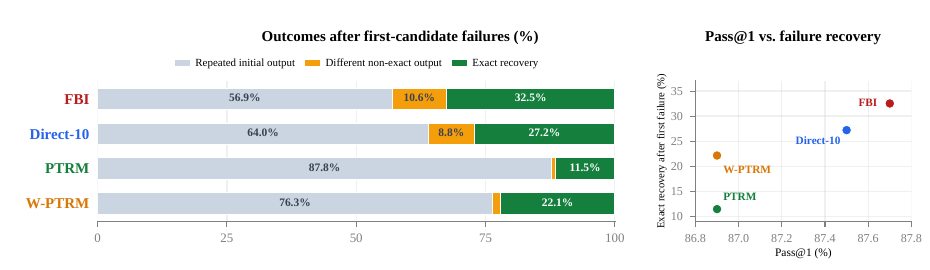}
\vspace{-2em}
  \caption{\textbf{Recovery after first-candidate failures on Maze-Hard.}
  Left: first-candidate failures are partitioned into three outcomes: only repeated initial outputs, different outputs with no exact solution, or at least one exact solution.
  Right: Pass@1 versus the exact-recovery rate among candidates 2--10 after each method's first candidate fails.
  FBI has the highest conditional exact-recovery rate of 32.5\%.}
  \label{fig:maze-hard-first-failure-recovery}
\end{figure*}

\vspace{-0.5em}
\paragraph{One-draw training cost.}
FBI achieves comparable mean Maze-Hard Pass@10 to Direct-10 across repeated runs at $K_{\mathrm{train}}=10$.
\Cref{fig:maze-hard-training-efficiency} reports these mean Pass@10 values alongside separately measured one-GPU training resources per update.
FBI uses one posterior draw per update, compared with ten for Direct-10.
FBI uses 12.3\% of Direct-10's measured training time and 10.4\% of its FLOPs per update.
Peak allocated GPU memory is 0.611\,GB for FBI and 3.707\,GB for Direct-10.
\Cref{app:compute-accounting} gives the one-GPU measurement details.

\section{Conclusion}
\label{sec:conclusion}

Fenchel--Bregman IVON (FBI) uses an exact reformulation and Bregman tracking for one-draw finite-$K$ posterior learning, improving candidate coverage across recursive and autoregressive reasoning.
On matched Maze-Hard, FBI reaches comparable Pass@10 to Direct-10 using 10.0\% as many posterior draws and 12.3\% of the measured training time per update.
On Qwen3-4B-Base, FBI attains the highest mean Pass@8 across five mathematical-reasoning benchmarks.
We also show that FB's gains across backbones and with a structured Gaussian posterior support its applicability beyond IVON.
However, surrogate mismatch, stochastic gradient noise, or posterior drift may weaken failure tracking.
Improving surrogate calibration where applicable and using predictive or variance-reduced tracking could make that signal more reliable.
Adaptive or correlated candidate pools and selector-aware training could better match inference, while amortized estimates for unseen problems and broader evaluation across tasks and posterior families could extend applicability.

\ifarxiv\else\clearpage\fi
\section*{AI Use Statement}
We used OpenAI GPT and Anthropic Claude to improve the clarity and readability of the manuscript, refine research ideas, and assist in developing mathematical derivations and proofs.
The authors reviewed all AI-assisted material and checked the mathematical arguments step by step, including their assumptions and derivations.
The authors take full responsibility for the final content of this work, including all AI-assisted text and mathematical claims.

\bibliographystyle{iclr2027_conference}
\bibliography{main}

\clearpage
\appendix

\section{\texorpdfstring{Derivations for finite-$K$ posterior learning and Fenchel--Bregman updates}{Derivations for finite-K posterior learning and Fenchel--Bregman updates}}
\label{app:proofs}

This appendix provides the derivations underlying the finite-$K$ objective and the one-draw Fenchel--Bregman update.
\Cref{app:finite-k-details} derives the binary all-fail probability and its direct $K$-draw Monte Carlo estimator.
\Cref{app:fenchel-proof} establishes the Fenchel representation and its optimal problem-specific weight, and \Cref{app:fixed-dual-gradient} shows that fixing this weight makes the posterior-dependent gradient estimable from one draw.
Finally, \Cref{app:bregman-proof} derives the Bregman update for the dual weight and its closed-form EMA implementation.

\subsection{Binary all-fail probability and direct Monte Carlo estimation}
\label{app:finite-k-details}
Fix a problem $x$ and draw $\theta_{1:K}\overset{\mathrm{iid}}{\sim}q_\phi$.
Any solver randomness $\xi_k$ is included in each draw, independently across candidates, and suppressed in the notation.
Since $F(\theta,x)\in\{0,1\}$,
\begin{equation}
  \Pr(\text{all $K$ candidates fail}\mid x)
  =
  \mathbb E\!\left[\prod_{k=1}^K F(\theta_k,x)\right]
  =
  \bar F_\phi(x)^K.
  \label{eq:app-binary-finite-k}
\end{equation}
The first equality uses the indicator property, and the second uses independence.
Thus, the product is an unbiased Direct-$K$ estimator using $K$ posterior draws, and candidate coverage is $1-\bar F_\phi(x)^K$.
Averaging the all-fail probability over training problems gives the data term in \Cref{eq:exact-finite-k-objective}.

For correlated draws, the product $\prod_{k=1}^K F(\theta_k,x)$ still equals one exactly when all $K$ candidates fail, so it remains the indicator of the all-fail event.
However, without independence its expectation does not generally factor into a product of marginal failure probabilities, and hence need not equal $\bar F_\phi(x)^K$.

Likewise, using the same posterior draw in all $K$ factors does not recover the finite-$K$ objective.
Because $F(\theta,x)\in\{0,1\}$,
\[
F(\theta,x)^K = F(\theta,x),
\]
so
\[
\mathbb E_{\theta\sim q_\phi}[F(\theta,x)^K]
=
\bar F_\phi(x),
\]
rather than $\bar F_\phi(x)^K$ in general.
Thus, the $K$-th-power objective relies on independent candidate draws, and a naive reuse of one draw cannot provide the same Monte Carlo estimator.

\subsection{Derivation of the Fenchel representation}
\label{app:fenchel-proof}
This section proves the Fenchel representation in \Cref{eq:fenchel-identity} and identifies its unique maximizer.
For $K\geq2$, let $\psi_K(\lambda)=(K-1)(\lambda/K)^{K/(K-1)}$ on $[0,K]$.
Its derivative and inverse are
\begin{equation}
  \psi_K'(\lambda)
  =\left(\frac{\lambda}{K}\right)^{1/(K-1)},
  \qquad
  (\psi_K')^{-1}(u)=Ku^{K-1},
  \quad u\in[0,1].
  \label{eq:app-dual-coordinate-derivative}
\end{equation}
The derivative is a strictly increasing bijection from $[0,K]$ to $[0,1]$, so $\psi_K$ is strictly convex.
For $r\in[0,1]$, the unique maximizer of the strictly concave function $\lambda r-\psi_K(\lambda)$ is $(\psi_K')^{-1}(r)$, including the endpoints $r=0,1$.
Hence,
\begin{equation}
  \begin{aligned}
    \lambda^\star(r)&=Kr^{K-1},\\
    \lambda^\star(r)r-\psi_K(\lambda^\star(r))
    &=Kr^K-(K-1)r^K=r^K.
  \end{aligned}
  \label{eq:app-fenchel-first-order-solution}
\end{equation}
This proves \Cref{eq:fenchel-identity}.
Setting $r=\bar F_\phi(x)$ gives \Cref{eq:saddle-objective} with the KL term unchanged and recovers the weight in \Cref{eq:finite-k-data-gradient}.
For $K=1$, the data term is linear and no Fenchel step is needed.

\subsection{One-draw gradient with a fixed weight}
\label{app:fixed-dual-gradient}
Fix a problem $x$ and suppose $q_\phi$ has a differentiable density whose support does not depend on $\phi$.
Assume differentiation can be exchanged with integration. The binary indicator $F(\theta,x)$ itself need not be differentiable.
The score-function identity gives
\begin{equation}
  \nabla_\phi\bar F_\phi(x)
  =
  \mathbb E_{\theta\sim q_\phi}\!\left[F(\theta,x)\nabla_\phi\log q_\phi(\theta)\right].
  \label{eq:app-fixed-dual-gradient}
\end{equation}
Let $\lambda_x$ be fixed with respect to $\phi$ and chosen before drawing $\theta$.
Multiplying the one-draw estimator in \Cref{eq:app-fixed-dual-gradient} by $\lambda_x$ gives an unbiased estimate of $\lambda_x\nabla_\phi\bar F_\phi(x)$.
When $\lambda_x=\lambda_x^\star$, this estimator is unbiased for the finite-$K$ data gradient in \Cref{eq:finite-k-data-gradient}.
At other fixed weights, it remains unbiased for the fixed-weight data gradient, but not generally for the finite-$K$ data gradient.
The KL term is outside this Monte Carlo argument and remains unchanged in the fixed-weight posterior subproblem.

At update $t$, $\lambda_{x,t}$ is determined before $\theta_t\sim q_{\phi_t}$ is drawn, from the current running estimate.
Conditioning on the information available before the draw therefore makes the weight fixed, so the same unbiasedness argument applies.
The observed failure $F_t$ then updates the running estimate for the next visit.

\subsection{Bregman update for the dual weight}
\label{app:bregman-proof}

For $K\geq2$, define the Bregman divergence generated by $\psi_K$ as
\[
D_{\psi_K}(\lambda,\mu)
=
\psi_K(\lambda)-\psi_K(\mu)-\psi_K'(\mu)(\lambda-\mu).
\]
Given the current weight $\lambda_{x,t}\in[0,K]$, the observed failure $F_t\in[0,1]$, and $\eta>0$, we consider the Bregman-proximal update
\begin{equation}
  \lambda_{x,t+1}
  =
  \arg\max_{\lambda\in[0,K]}
  \left\{
    \lambda F_t-\psi_K(\lambda)
    -\eta^{-1}D_{\psi_K}(\lambda,\lambda_{x,t})
  \right\}.
  \label{eq:app-bregman-prox}
\end{equation}
The first two terms are the per-step dual objective induced by the Fenchel representation, while the Bregman term regularizes the update toward the previous weight.

\begin{proposition}[Closed-form Bregman update]
\label{prop:bregman-update}
Let $K\geq2$, $\eta>0$, and suppose
\[
\lambda_{x,t} = K\widehat F_{x,t}^{K-1},
\qquad 
\widehat F_{x,t}\in[0,1].
\]
With $\beta=1/(1+\eta)$, the unique solution of
\Cref{eq:app-bregman-prox} is
\begin{equation*}
  \widehat F_{x,t+1} = \beta\widehat F_{x,t} +(1-\beta)F_t,
  \qquad
  \lambda_{x,t+1} = K\widehat F_{x,t+1}^{K-1},
\end{equation*}
which gives the update in \Cref{eq:dual-ema}.
\end{proposition}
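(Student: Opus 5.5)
The plan is to write out the objective in \Cref{eq:app-bregman-prox} explicitly, observe that it is strictly concave on $[0,K]$, and then solve its first-order condition in the mirror coordinate $u=\psi_K'(\lambda)$, where it becomes linear. Expanding the Bregman term, the objective equals
\[
G(\lambda)=\lambda F_t-\psi_K(\lambda)-\eta^{-1}\bigl(\psi_K(\lambda)-\psi_K(\lambda_{x,t})-\psi_K'(\lambda_{x,t})(\lambda-\lambda_{x,t})\bigr).
\]
Up to constants this is $\lambda\bigl(F_t+\eta^{-1}\psi_K'(\lambda_{x,t})\bigr)-(1+\eta^{-1})\psi_K(\lambda)$. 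By \Cref{app:fenchel-proof}, $\psi_K$ is strictly convex on $[0,K]$, so $G$ is strictly concave and continuous on the compact interval. It therefore has a unique maximizer, which gives the uniqueness claim.

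Next I identify that maximizer via the first-order condition
\[
(1+\eta^{-1})\psi_K'(\lambda)=F_t+\eta^{-1}\psi_K'(\lambda_{x,t}).
\]
By \Cref{eq:app-dual-coordinate-derivative} and the hypothesis $\lambda_{x,t}=K\widehat F_{x,t}^{K-1}$, we have $\psi_K'(\lambda_{x,t})=\widehat F_{x,t}$. Dividing by $1+\eta^{-1}=(1+\eta)/\eta$ gives
\[
\psi_K'(\lambda)=\frac{\eta}{1+\eta}F_t+\frac{1}{1+\eta}\widehat F_{x,t}=\beta\widehat F_{x,t}+(1-\beta)F_t=:\widehat F_{x,t+1},
\]
with $\beta=1/(1+\eta)$. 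Applying the inverse $(\psi_K')^{-1}(u)=Ku^{K-1}$ then yields $\lambda_{x,t+1}=K\widehat F_{x,t+1}^{K-1}$.

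The step needing care is justifying that this stationary point is the constrained maximizer, including the boundary cases. Since $F_t,\widehat F_{x,t}\in[0,1]$ and $\widehat F_{x,t+1}$ is a convex combination of them, $\widehat F_{x,t+1}\in[0,1]$, so $\lambda_{x,t+1}\in[0,K]$ is feasible. To avoid relying on interior differentiability at the endpoints, where $\psi_K'$ is only one-sided (for instance $\psi_K'(0)=0$), I would argue through the conjugate structure instead. The maximizer of $\lambda c-a\psi_K(\lambda)$ with $a>0$ equals $(\psi_K')^{-1}(c/a)$ whenever $c/a\in[0,1]$, by the same argument as in \Cref{app:fenchel-proof}: concavity plus $G'(\lambda)$ being positive to the left of the stationary point and negative to its right. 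Taking $a=1+\eta^{-1}$ and $c=F_t+\eta^{-1}\widehat F_{x,t}$, so that $c/a=\widehat F_{x,t+1}$, covers the endpoints as well and completes the proof.
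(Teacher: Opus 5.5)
Your proof is correct and follows the paper's argument. Both use strict concavity of the proximal objective for uniqueness, solve the stationarity condition in the mirror coordinate using $\psi_K'(\lambda_{x,t})=\widehat F_{x,t}$, and invert via $(\psi_K')^{-1}(u)=Ku^{K-1}$ after checking that the convex combination stays in $[0,1]$. Your one refinement is to rewrite the objective as $a\bigl(\lambda (c/a)-\psi_K(\lambda)\bigr)$ plus a constant and invoke the Fenchel-maximizer result of \Cref{app:fenchel-proof}; this handles the endpoint cases $\widehat F_{x,t+1}\in\{0,1\}$ explicitly, which the paper covers only implicitly through strict concavity.
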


\begin{proof}
From \Cref{app:fenchel-proof},
\[
\psi_K'(\lambda) = \left(\frac{\lambda}{K}\right)^{1/(K-1)}, \qquad (\psi_K')^{-1}(u) = Ku^{K-1}.
\]
Differentiating the objective in \Cref{eq:app-bregman-prox} gives the stationarity condition
\[
(1+\eta)\psi_K'(\lambda_{x,t+1})
=
\psi_K'(\lambda_{x,t})+\eta F_t.
\]
Since
$\lambda_{x,t}=K\widehat F_{x,t}^{K-1}$,
we have
$\psi_K'(\lambda_{x,t})=\widehat F_{x,t}$.
Therefore,
\begin{align}
\psi_K'(\lambda_{x,t+1})
&= \frac{\widehat F_{x,t}+\eta F_t}{1+\eta} \notag\\
&= \beta\widehat F_{x,t} +(1-\beta)F_t \coloneqq \widehat F_{x,t+1}.
\label{eq:app-bregman-coordinate}
\end{align}
Since $\widehat F_{x,t},F_t\in[0,1]$, their convex combination $\widehat F_{x,t+1}$ also lies in $[0,1]$.
Applying $(\psi_K')^{-1}$ then gives
\[
\lambda_{x,t+1}
=
K\widehat F_{x,t+1}^{K-1}
\in[0,K].
\]
Finally, the objective in \Cref{eq:app-bregman-prox} is strictly concave in $\lambda$, so this solution is the unique maximizer.
\end{proof}

Thus, the exponential running average in \Cref{eq:dual-ema} is not an independently chosen heuristic; it is the closed-form solution of the Bregman-proximal weight update in \Cref{eq:app-bregman-prox}.
\section{Experimental details}
\label{app:experimental-details}

In this section, we describe the evaluation protocols, baseline controls, and posterior-training settings used in \Cref{sec:experiments}.
We then specify the autoregressive evaluation and the training-cost measurements.
We use the failure score $f$ and running estimate $\widehat f$ defined in \Cref{sec:ivon-update}. \Cref{app:posterior-training,app:autoregressive-details} specify how the failure signal and its gradient estimate are instantiated in our experiments.

\subsection{Recursive evaluation protocols}
\label{app:recursive-evaluation}

\paragraph{Metrics and candidate budgets.}
Selected accuracy measures exact solutions returned by a method's selector, whereas Pass@$K$ measures exact-solution coverage among its first $K$ ordered candidates.
Here $K$ denotes the evaluation budget, while $K_{\mathrm{train}}$ denotes the training objective's target budget.
Posterior methods use independently sampled parameters, and stochastic solvers also resample their internal randomness.
Unless stated otherwise, $K=10$, and smaller budgets use ordered prefixes of the same candidate bank.
For ARC-AGI, these prefixes follow the aggregation and ranking below.
We randomly hold out 10\% of the original training problems for validation and use it to select noise scales, the selection margin, posterior scales, and checkpoints.

\paragraph{Sudoku-Extreme.}
Sudoku-Extreme expands 1,000 training source puzzles with digit permutations, valid row and column permutations, and optional transposition, yielding 1,001,000 rows.
The failure tracker is keyed by source-puzzle ID, so all variants of a source share one $\widehat f_x$ state.
When a minibatch contains repeated sources, we average their observed failure scores before applying one EMA update per source.
We evaluate candidates at reasoning depth 64 on all 422,786 test puzzles.
TRM-based methods select with the largest $q_{\mathrm{halt}}$ score, while GRAM uses its native selector.

\paragraph{Maze-Hard.}
Maze-Hard contains 1,000 training mazes and 1,000 test mazes.
TRM training uses the eight dihedral views, while FPRM adaptation uses unaugmented mazes.
TRM-based evaluations use reasoning depth 16.
PTRM uses the largest $q_{\mathrm{halt}}$ score, and W-PTRM applies the same rule after perturbing the Q head with the other trainable parameters.
FBI retains the first posterior candidate unless the highest-scoring candidate exceeds its logit by more than $\ln 2$.
FBI uses this margin rule with both TRM and FPRM backbones.
We also tested this rule with the baselines, but retained their existing selectors because it did not improve their performance.

\paragraph{ARC-AGI.}
We follow the augmentation-based ARC evaluation framework of TRM and PTRM.
ARC-AGI-1 and ARC-AGI-2 evaluation use the public ARC-AGI-1\footnote{\url{https://huggingface.co/datasets/arcprize/arc_agi_v1_public_eval}} and ARC-AGI-2\footnote{\url{https://huggingface.co/datasets/arcprize/arc_agi_v2_public_eval}} releases.
For each augmentation, we evaluate 25 fixed posterior draws at reasoning depth 16 and map their predictions back to the original grid orientation~~\citep{sghaier2026probabilistic}.
For ARC-AGI-1, we preserve its first two outputs from conservative Q- and structure-aware aggregation and rank the remainder by row-relative Q support within a one-logit margin.
For ARC-AGI-2, we additionally use row-relative Q support and structural consistency for its first two outputs.
We also tested FBI’s aggregation and ranking rules with the baselines but these changes did not improve their performance.
Following the TRM/PTRM ARC-AGI evaluation~\citep{sghaier2026probabilistic}, we compute Pass@$K$ over the top $K$ predictions after aggregation and ranking, rather than over raw solver outputs.

\subsection{Baseline details}
\label{app:baseline-details}

\paragraph{Sampling-based reasoning.}
\textbf{PTRM} keeps the model parameters fixed, injects Gaussian noise into the latent state at each deep-recursion step, and selects among the resulting trajectories.

\paragraph{Weight-sampling controls.}
\textbf{W-PTRM} and \textbf{W-FPRM} are non-learned weight-sampling baselines.
Let $\theta_0\in\mathbb R^d$ collect every trainable floating-point parameter of the corresponding checkpoint, including its Q head but excluding buffers and masks.
For candidate $j$, both controls sample the isotropic Gaussian
\begin{equation*}
  \theta^{(j)}=\theta_0+\sigma\epsilon_j,
  \qquad
  \epsilon_j\overset{\mathrm{iid}}{\sim}\mathcal N(0,I_d),
  \qquad
  \sigma=m\frac{1}{d}\sum_{r=1}^{d}|\theta_{0,r}|,
\end{equation*}
where $m$ is a dimensionless multiplier.
Each draw perturbs the original checkpoint independently, without latent noise.
\Cref{tab:weight-sampling-controls} gives the selected scales.
Sudoku-Extreme and Maze-Hard use scales selected on the validation splits described above and freeze them before test evaluation.
ARC-AGI-1 uses the fixed multiplier 0.30.
ARC-AGI-2 selects its scale by Pass@2, then Pass@1, Pass@5, and pairwise Q-score accuracy.

\begin{table}[htbp]
  \caption{\textbf{Fixed-Gaussian weight-sampling controls.}
  The variance of every sampled scalar is $\sigma^2$.}
  \label{tab:weight-sampling-controls}
  \centering
  \small
  \setlength{\tabcolsep}{7pt}
  \begin{tabular}{lcc}
    \toprule
    Setting & $m$ & $\sigma$ \\
    \midrule
    W-PTRM, Sudoku-Extreme & 0.30 & 0.00312003 \\
    W-PTRM, Maze-Hard & 0.30 & 0.00202273 \\
    W-PTRM, ARC-AGI-1 & 0.30 & 0.00650025 \\
    W-PTRM, ARC-AGI-2 & 0.20 & 0.00453306 \\
    W-FPRM, Maze-Hard & 0.05 & 0.000298018 \\
    \bottomrule
  \end{tabular}
\end{table}

\paragraph{Additional comparison with GRAM.}
\textbf{GRAM} models reasoning as stochastic latent trajectories and samples several trajectories in parallel~\citep{baek2026generative}.
\Cref{tab:gram-supplementary} compares GRAM and FBI on Sudoku-Extreme and Maze-Hard.

\begin{table}[htbp]
  \caption{\textbf{Additional recursive reasoning comparison.}
  Acc. is selected accuracy, and Pass@10 is coverage among ten candidates.
  All values are percentages.
  FBI exceeds GRAM on both metrics and benchmarks.}
  \label{tab:gram-supplementary}
  \centering
  \small
  \setlength{\tabcolsep}{6pt}
  \begin{tabular}{lcccc}
    \toprule
    & \multicolumn{2}{c}{Sudoku-Extreme} & \multicolumn{2}{c}{Maze-Hard} \\
    \cmidrule(lr){2-3}\cmidrule(lr){4-5}
    Method & Acc. $\uparrow$ & Pass@10 $\uparrow$ & Acc. $\uparrow$ & Pass@10 $\uparrow$ \\
    \midrule
    GRAM & 98.48 & 98.49 & 87.30 & 87.30 \\
    \rowcolor{fbrowhighlight} \textbf{FBI (Ours)} & \textbf{99.53} & \textbf{99.55} & \textbf{88.00} & \textbf{91.30} \\
    \bottomrule
  \end{tabular}
\end{table}

\subsection{Recursive posterior training}
\label{app:posterior-training}

\paragraph{Posterior optimization methods.}
We use the IVON posterior update described in \Cref{sec:ivon-update}~\citep{shen2024variational}.
For IVON-based methods, we initialize the posterior mean from a task checkpoint and fine-tune its mean and diagonal precision on the task training split.
Direct-10 uses the same failure score and IVON posterior machinery as FBI but estimates the $K_{\mathrm{train}}=10$ data term with ten independent posterior draws per update.
It optimizes the transformed failure score rather than the raw task loss.

\paragraph{Failure-score construction.}
Given a solver's nonnegative per-problem training loss $\ell$, FBI uses $f=1-\exp(-\ell/T)$ with a task-specific temperature $T>0$.
The loss $\ell$ is produced by the solver's training objective before auxiliary selector losses are added.
Write $\bar f_\phi(x)\coloneqq\mathbb E_{\theta\sim q_\phi}[f(\theta,x)]$. Replacing the binary failure probability in \Cref{eq:exact-finite-k-objective} with this posterior mean gives the score-based objective
\begin{equation}
  \min_\phi~
  \mathbb E_x[\bar f_\phi(x)^{K_{\mathrm{train}}}]
  +
  \tau_{\mathrm{KL}}D_{\mathrm{KL}}(q_\phi\Vert p_0).
  \label{eq:finite-k-score-objective}
\end{equation}
This objective equals the exact all-fail target only when $\bar f_\phi(x)$ equals the one-candidate failure probability; otherwise it is a surrogate.
For $K_{\mathrm{train}}$ independent posterior draws, $\prod_{k=1}^{K_{\mathrm{train}}} f(\theta_k,x)$ is an unbiased Monte Carlo estimator of $\bar f_\phi(x)^{K_{\mathrm{train}}}$.
These scores supply the fixed-weight IVON loss in \Cref{eq:fbi-weighted-loss}.

\paragraph{Practical posterior losses.}
Let $B_s$ be the minibatch at global optimizer step $s$, and let $g_i$ denote the problem or augmentation-group key for row $i$.
With the batch-averaged Q auxiliary loss denoted by $\mathcal L_q$, the implemented recursive FBI scalar differentiated at step $s$ is
\begin{equation}
  \mathcal L_{\mathrm{FBI},s}
  =
  \frac{1}{|B_s|}\sum_{i\in B_s}
  \lambda_{g_i,s} f(\theta_s,x_i)
  +\alpha_q\mathcal L_q(\theta_s;B_s).
  \label{eq:fb-practical-loss}
\end{equation}
Each $\lambda_{g_i,s}=K_{\mathrm{train}}\widehat f_{g_i,s}^{K_{\mathrm{train}}-1}$ is computed from the tracker before drawing $\theta_s$ and is held fixed and detached throughout that posterior update.
If several rows in the minibatch share a group key, their failure scores are averaged before one tracker update for that group; tracker states for absent groups remain unchanged.
The Fenchel penalty is omitted from this scalar because it is constant at fixed $\lambda$.
Under IVON's parameterization, ESS and weight decay jointly specify the zero-mean isotropic Gaussian prior $p_0$.
Their product gives its precision.
We vary the KL strength through ESS while adjusting weight decay inversely when the prior is to be held fixed.
For the TRM runs, $\mathcal L_q$ averages prefix-level, class-balanced cell-level, and smooth-min aggregate binary cross-entropy terms.
This auxiliary is added after constructing $f$, so it is neither part of the failure score nor multiplied by $\lambda$.
It contributes to both gradient and curvature estimates.

\paragraph{TRM training configurations.}
\Cref{tab:fb-trm-training,tab:arc-training} report the task-specific settings.
All four runs use curvature initialization 3.0, IVON momentum 0.9, curvature EMA 0.99999, and posterior noise scale 1.0.
FBI initializes each failure estimate as $\widehat f_x=0.5$.
On Sudoku-Extreme and Maze-Hard, dense parameters use IVON, puzzle embeddings use separate learning rates, and model EMA is 0.999.
ARC-AGI freezes the puzzle embeddings.
We use the TRM-MLP variant for Sudoku-Extreme and the TRM-Att variant for Maze-Hard, ARC-AGI-1, and ARC-AGI-2.

\begin{table}[ht]
  \caption{\textbf{Training configurations for TRM-based FBI.}
  The Sudoku-Extreme and Maze-Hard runs optimize $K_{\mathrm{train}}=10$ with one posterior draw per update.
  Both runs use constant learning rates and bfloat16 forward computation.
  Both use batch size 128, failure-score temperature 1.0, and puzzle-embedding weight decay 0.1.}
  \label{tab:fb-trm-training}
  \centering
  \small
  \setlength{\tabcolsep}{7pt}
  \begin{adjustbox}{max width=\linewidth}
  \begin{tabular}{lcc}
    \toprule
    Quantity & Sudoku-Extreme & Maze-Hard \\
    \midrule
    Updates & 200 & 188 \\
    Dense learning rate & $1\times10^{-4}$ & $2.8565\times10^{-5}$ \\
    Puzzle-embedding learning rate & $1\times10^{-2}$ & $1.3323\times10^{-4}$ \\
    IVON effective sample size & 30,000 & 15,000 \\
    IVON weight decay & 0.225 & 0.5 \\
    Relative KL strength & 1 & 2 \\
    $q_{\mathrm{halt}}$ loss weight & 0.35 & 0.27760 \\
    Failure-EMA coefficient & 0.99 & 0.9 \\
    \bottomrule
  \end{tabular}
  \end{adjustbox}
\end{table}

\begin{table}[ht]
  \caption{\textbf{ARC-AGI posterior-training configurations.}
  Both runs use a diagonal IVON posterior over 6,829,058 dense parameters, frozen puzzle embeddings, and one parameter draw per update.
  Both use $K_{\mathrm{train}}=10$, failure-score temperature 1.0, dense learning rate $10^{-4}$, and failure-EMA coefficient 0.99.}
  \label{tab:arc-training}
  \centering
  \small
  \setlength{\tabcolsep}{5pt}
  \begin{adjustbox}{max width=\linewidth}
  \begin{tabular}{lcc}
    \toprule
    Quantity & ARC-AGI-1 & ARC-AGI-2 \\
    \midrule
    Updates / batch size & 400 / 128 & 400 / 128 \\
    $q_{\mathrm{halt}}$ loss weight & 1.0 & 0.5 \\
    IVON effective sample size & 30,000 & 30,000 \\
    IVON weight decay & 0.225 & 0.225 \\
    Evaluation posterior scale & 1.5 & 1.5 \\
    \bottomrule
  \end{tabular}
  \end{adjustbox}
\end{table}

\paragraph{FPRM training configuration.}
FBI initializes its posterior mean from the EMA weights of the released FPRM checkpoint.
We initialize IVON from scratch with zero momentum and unit curvature, without a separate model EMA.
\Cref{tab:fb-fprm-training} lists the settings for 2,000 updates.

\begin{table}[ht]
  \caption{\textbf{FPRM-based FBI training on Maze-Hard.}
  The run uses bfloat16 forward computation, $K_{\mathrm{train}}=10$, and one posterior draw per update.}
  \label{tab:fb-fprm-training}
  \centering
  \small
  \setlength{\tabcolsep}{7pt}
  \begin{tabular}{lc}
    \toprule
    Quantity & Value \\
    \midrule
    Global batch size / GPUs & 768 / 8 \\
    Dense learning rate & $10^{-4}$ \\
    IVON effective sample size & 100,000 \\
    IVON momentum / curvature EMA & 0.9 / 0.999 \\
    IVON noise scale & 1.0 \\
    Failure-score temperature / EMA coefficient & 0.0625 / 0.95760 \\
    Initial failure estimate / $q_{\mathrm{halt}}$ loss weight & 0.5 / 0.5 \\
    \bottomrule
  \end{tabular}
\end{table}

\subsection{Autoregressive experiments}
\label{app:autoregressive-details}

\paragraph{Model and evaluation.}
We use Qwen3-4B-Base with MATH7.5K training data and evaluate on MATH500, OlymMATH, AMC23, AIME24, and AIME25.
Each FBI candidate uses an independent posterior parameter draw followed by autoregressive sampling.
The sampling temperature is 0.6 and top-$p$ is 0.95~\citep{zhang2026beyond}.

\paragraph{Evaluation protocol and metrics.}
We generate sixteen candidates per problem, score every candidate with Math-Verify~\citep{kydlicek2024mathverify}, and record binary correctness.
If $c$ of a problem's sixteen candidates are correct, we estimate Pass@8 as $1-\binom{16-c}{8}/\binom{16}{8}$.
We average the problem-level estimates within each benchmark and report the unweighted mean across the five benchmark scores.
All entries in \Cref{tab:autoregressive-standard} come from these common-protocol evaluations.
\Cref{tab:autoregressive-pass-at-k} reports the complete candidate-coverage comparison at $K\in\{1,4,8,16\}$, including the methods omitted from \Cref{fig:autoregressive-pass-at-k} for readability.

\begin{table}[ht]
  \caption{\textbf{Mathematical-reasoning candidate coverage with Qwen3-4B-Base.}
  Each entry is the unweighted mean Pass@$K$ across the five benchmarks, in percent.
  All methods use the same evaluation protocol and sixteen candidates per problem.}
  \label{tab:autoregressive-pass-at-k}
  \centering
  \small
  \setlength{\tabcolsep}{6pt}
  \begin{adjustbox}{max width=\linewidth}
  \begin{tabular}{lcccc}
    \toprule
    Method & Pass@1 & Pass@4 & Pass@8 & Pass@16 \\
    \midrule
    Base & 27.07 & 46.73 & 54.45 & 61.10 \\
    GRPO & 41.89 & 52.23 & 56.82 & 61.41 \\
    Pass@$K$ Training & 40.17 & 53.67 & 59.55 & \underline{64.16} \\
    DMPO & \textbf{43.01} & \textbf{55.03} & \underline{59.60} & 63.28 \\
    MaxRL & \underline{42.43} & 54.38 & 58.97 & 62.71 \\
    MMPO & 38.79 & 51.01 & 56.56 & 61.94 \\
    \rowcolor{fbrowhighlight} \textbf{FBI (Ours)} & 40.22 & \underline{54.52} & \textbf{60.44} & \textbf{66.06} \\
    \bottomrule
  \end{tabular}
  \end{adjustbox}
\end{table}

\paragraph{FBI training.}
The common posterior update is given in \Cref{sec:ivon-update}.
We place a diagonal IVON posterior over all actor parameters, initialize its mean from Qwen3-4B-Base, and set its diagonal curvature to $10^{-3}$.
We train for 300 steps with learning rate $0.5$, effective sample size $10^9$, batch size 32, and seed 42 for the data order and IVON noise.
Each update draws one parameter sample shared by its eight responses per prompt and all backward passes.
We estimate policy gradients with REINFORCE Leave-One-Out (RLOO)~\citep{ahmadian2024back}, which uses the other responses in each group to construct a leave-one-out reward baseline.
The stored failure score starts at $8^{-1/7}$ and yields the normalized $K_{\mathrm{train}}=8$ weight $8\hat f^7$ before each update.
After a successful update, an EMA with coefficient $0.5$ incorporates the observed failure $1-\frac{1}{8}\sum_{j=1}^{8}r_j$, where Math-Verify supplies binary rewards $r_j$.

\paragraph{FBI evaluation.}
Following MMPO~\citep{zhang2026beyond}, we use the following user prompt:
\begin{quote}
\small
\texttt{\{problem\} Let's think step by step and output the final answer within \textbackslash boxed\{\}.}
\end{quote}
We use a 4,096-token response limit.
We independently sample 16 parameter vectors from the learned posterior and use each vector to generate one response per problem.
We fix the posterior-draw and decoding seed schedules for reproducibility.

\subsection{Bayesian method comparison}
\label{app:bayesian-comparison}

\paragraph{Comparison details.}
The Bayesian comparison in Figure~\ref{fig:maze-bayesian-methods} follows the Maze-Hard/TRM evaluation protocol described in Appendix~\ref{app:recursive-evaluation}.
All methods start from the public TRM checkpoint.
IVON and EVON use single-candidate training, whereas FBI and FBE target $K_{\mathrm{train}}=10$ using one posterior draw per update.
The FBI training configuration is given in Appendix~\ref{app:posterior-training}.

\paragraph{Bayesian methods.}
IVON and FBI use diagonal Gaussian posteriors~\citep{shen2024variational}.
EVON and FBE learn a Gaussian posterior with diagonal covariance in the eigenbasis of a Kronecker-factored preconditioner~\citep{minut2026soap}.
Transforming this posterior back to the original parameter space yields a block-diagonal covariance that allows correlations within parameter blocks.
LA fits a diagonal empirical-Fisher approximation around the weights of the model~\citep{daxberger2021laplace}.
SWAG constructs a Gaussian posterior from the mean and diagonal-plus-low-rank covariance of training checkpoints~\citep{maddox2019swag}.

\subsection{Training compute accounting}
\label{app:compute-accounting}

\paragraph{Resource-benchmark setup.}
For resource measurement in \Cref{fig:maze-hard-training-efficiency}, FBI and Direct-10 use the same TRM backbone, initial checkpoint, dataset, seed, global batch size, bfloat16 computation, IVON schedule, and one NVIDIA RTX A6000 48\,GB GPU, with compilation disabled.
The benchmark changes only the finite-$K_{\mathrm{train}}$ gradient estimator and its number of posterior draws, one for FBI and ten for Direct-10, with $K_{\mathrm{train}}=10$ for both.
Both methods sample parameters from a learned IVON posterior.
The TRM experiments estimate IVON curvature with a Price's-theorem estimator.
FBI reuses a scaled finite-$K_{\mathrm{train}}$ gradient for curvature on most updates and computes an ordinary full-batch curvature anchor every 20 updates.
Direct-10 sequentially evaluates ten independent parameter draws, retains their computation graphs, and computes the product gradient with one forward and one backward pass per draw.
Direct-10 also computes a full-batch curvature anchor every 20 updates.
The Pass@10 values in \Cref{fig:maze-hard-training-efficiency} are the reported repeated-run endpoint means, retained separately from this resource benchmark.

\paragraph{Timing, FLOPs, and memory.}
The timed loop includes batch retrieval, optimizer updates, and EMA.
PyTorch's FLOP counter profiles ordinary and curvature-anchor updates, combined in their 19:1 frequency.
It counts matrix multiplication and fused attention in forward and backward passes, but not every elementwise operation.
Peak allocated GPU memory is measured across the 30-update benchmark run, with the peak counter reset before model construction.
\Cref{tab:maze-compute-accounting} summarizes the resource measurements used in \Cref{fig:maze-hard-training-efficiency}.

\begin{table}[ht]
  \caption{\textbf{Matched one-GPU training-resource benchmark on Maze-Hard.}
  Time and counted FLOPs are per update.
  FBI uses 12.3\% of Direct-10's time and 10.4\% of its counted FLOPs.}
  \label{tab:maze-compute-accounting}
  \centering
  \small
  \setlength{\tabcolsep}{7pt}
  \begin{tabular}{lcc}
    \toprule
    Quantity & FBI & Direct-10 \\
    \midrule
    Posterior draws per update & 1 & 10 \\
    Training time/update (s) & 0.0692 & 0.5614 \\
    Training FLOPs/update ($10^{12}$) & 0.427 & 4.087 \\
    Peak allocated GPU memory (GB) & 0.611 & 3.707 \\
    \bottomrule
  \end{tabular}
\end{table}

\paragraph{Latent-space direct $K=10$ training.}
We additionally study a latent-space direct $K=10$ training variant as an auxiliary performance and resource comparison.
This variant injects independent Gaussian noise into TRM's reasoning latent and uses the product of ten candidate failure scores as its direct-$K$ data term under shared model weights.
We refer to it as latent-space direct $K=10$ training.
On one GPU, all ten candidates are processed in a single batch.
For the resource benchmark, we retain the same backbone, initial checkpoint, dataset, seed, bfloat16 precision, and disabled compilation as above, with original data batch size 1 and $K_{\mathrm{train}}=10$.

\begin{table}[th]
  \caption{\textbf{Latent-space direct-10: performance and one-GPU training resources.}
  Pass@10 reports means over three independent training runs under the same Maze-Hard evaluation protocol.
  Performance is measured separately from the one-GPU resource benchmark, whose GPU count is reported in the GPUs column.
  Time reports the mean and sample standard deviation across three benchmark repetitions.
  }
  \label{tab:maze-candidate-parallelism}
  \centering
  \small
  \setlength{\tabcolsep}{4pt}
  \begin{tabular}{lccccc|c}
    \toprule
    Method & GPUs & Time/update & \multicolumn{2}{c}{Memory (GB)} & FLOPs/update & Pass@10 \\
    & & (ms) & Max/GPU & Sum & ($10^{12}$) & (\%) \\
    \midrule
    FBI & 1 & $72.4_{\pm 4.6}$ & 0.611 & 0.611 & 0.427 & \textbf{91.3} \\
    Latent-space Direct-10 & 1 & $143.4_{\pm 0.7}$ & 2.595 & 2.595 & 4.067 & 88.8 \\
    \bottomrule
  \end{tabular}
\end{table}

\Cref{tab:maze-candidate-parallelism} compares endpoint performance and separately measured one-GPU training resources.
FBI achieves a mean Maze-Hard Pass@10 of 91.3\%, compared with 88.8\% for latent-space Direct-10 under the same evaluation protocol, a difference of 2.5 percentage points.
In the separate one-GPU resource benchmark, FBI uses 50.5\% of the measured update time, 10.5\% of the counted FLOPs, and 23.5\% of the peak allocated GPU memory of the latent-space variant.
On the other hand, FBI and parameter-space Direct-10 learn both the posterior mean and parameter-wise sampling scales.
This allows the magnitude of weight perturbations to adapt to the training data, providing a learned mechanism for controlling candidate variation.

\section{Hyperparameter sensitivity}
\label{app:ablations}

We vary the failure-EMA coefficient and relative KL strength used by FBI on Maze-Hard, holding relative KL strength at 2 for the EMA sweep and the failure-EMA coefficient at its reported value for the relative-KL sweep.
We use the same random seed for all experiments in these sweeps.
For the relative-KL sweep, we vary $\tau_{\mathrm{KL}}$ through IVON's effective sample size while adjusting weight decay inversely.
This keeps the product of ESS and weight decay, and hence the implied Gaussian prior, fixed.

\begin{figure}[ht]
  \centering
  \includegraphics[width=\linewidth]{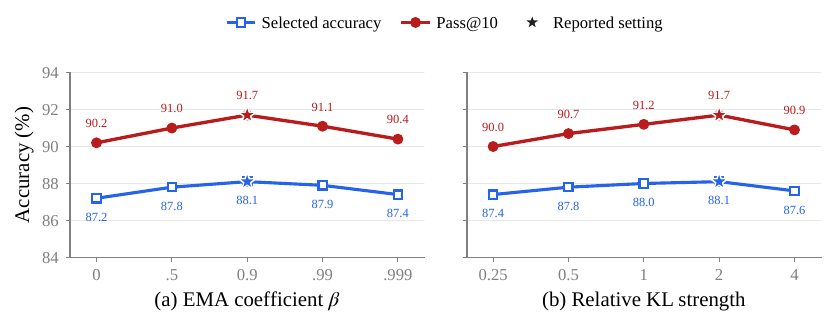}
  \caption{\textbf{EMA and KL sensitivity of FBI on Maze-Hard.}
  (a) varies the failure-EMA coefficient $\beta$ at relative KL strength 2.
  (b) varies relative KL strength at $\beta=0.9$ while keeping the implied prior fixed.
  Selected accuracy uses the frozen prefix-Q selector, and Pass@10 measures coverage among ten posterior candidates on test set with one fixed seed.
  Both metrics are highest at $\beta=0.9$ in (a) and relative KL strength 2 in (b) among the tested settings.}
  \label{fig:maze-hard-ema-kl-ablation}
\end{figure}

\Cref{fig:maze-hard-ema-kl-ablation} evaluates selected accuracy and Pass@10 under the same prefix-Q selector as the main Maze-Hard result.
Across the EMA sweep at relative KL strength 2, selected accuracy ranges from 87.2\% to 88.1\% and Pass@10 from 90.2\% to 91.7\%.
Both metrics rise from $\beta=0$ to the reported $\beta=0.9$ setting, then fall at $\beta=0.99$ and $0.999$.
At $\beta=0.9$, selected accuracy is 88.1\% and Pass@10 is 91.7\%.
The $\beta=0$ setting is a boundary control outside the finite-$\eta$ Bregman-proximal setting analyzed above.
In the relative-KL sweep, both metrics rise from 87.4\%/90.0\% at strength 0.25 to 88.1\%/91.7\% at strength 2, then fall to 87.6\%/90.9\% at strength 4.

\end{document}